\newcommand{\grnd}{\mathit{ground}}
\newtheorem{theorem}{Theorem}[section]
  \title{Automated Aggregator --- Rewriting with the Counting Aggregate}
  \author{Michael Dingess and Miroslaw Truszczynski\thanks{Research supported by National Science Foundation grant 1707371}
  \institute{Department of Computer Science, University of Kentucky, United States}
  \email{madingess@yahoo.com, mirek@cs.uky.edu}}
\begin{document}

\label{firstpage}

\maketitle

\begin{abstract}
Answer set programming is a leading declarative constraint programming paradigm with wide use for complex knowledge-intensive applications. Modern answer set programming languages support many equivalent ways to model constraints and specifications in a program. However, so far answer set programming has failed to develop systematic methodologies for building representations that would uniformly 
lend well to automated processing. This suggests that encoding selection, in
the same way as algorithm selection and portfolio solving, may be a viable direction for improving performance of answer-set solving. The necessary precondition
is automating the process of generating possible alternative encodings. Here we present an automated rewriting system, the \emph{Automated Aggregator} or \emph{AAgg}, that given a non-ground logic program, produces a family of equivalent programs with complementary performance when run under modern answer set programming solvers. We demonstrate this behavior through experimental analysis and propose the system's use in automated answer set programming solver selection tools. 

\end{abstract}

\section{Introduction}
Developers of answer set programming (ASP) solutions often face situations 
where individual constraints of a problem or even the problem as a whole can
be expressed in several syntactically different but semantically equivalent 
ways. Picking the right representation is crucial to designing these solutions 
because, given an instance, certain representations perform better (often much
better) than others when processed with modern ASP grounders and solvers. 
However, techniques for selecting a particular representation are often
ad hoc and tailored to the needs of the particular application, and require
significant programming expertise from the programmer. 

About a decade ago, Gebser et al. presented a set of "rules-of-thumb" used by their
team in manual tuning of ASP solutions \cite{Gebser-2011}. These rules include suggestions on 
program rewritings that often result in substantial performance gains. This 
was verified experimentally by Gebser et al., with all rewritings used 
in their experiments generated manually \cite{Gebser-2011}. Buddenhagen and Lierler studied the 
impact of these
rewritings on an ASP-based natural language processor and reported orders of 
magnitude gains in memory and time consumption as a result of some program 
transformations they executed manually \cite{Buddenhagen-2015}. Because of these promising results,
researchers proposed to automate the task of program rewriting. 
Bichler et al. investigated rewritings of long rules
guided by the tree decomposition of graphs built of program 
rules (a form of join optimization) \cite{bichler-2016}. Hippen and Lierler proposed a method of applying projection to 
rewrite rules based on estimates of the size of the ground program \cite{Hippen-2018}.

\par
These projects demonstrated that automated program rewritings may lead to 
programs performing better than the original ones when run under current 
ASP solvers such as \emph{gringo/clasp} \cite{Gebser-2015}. However, the 
effects of rewritings are not uniformly the same. In fact, depending on the 
actual instance they are run with, rewritten programs may perform worse that 
the original ones. This is a problem because a non-uniform behavior makes
the process of selecting the \emph{uniformly best} encoding ill defined. 
Nevertheless, automated rewritings potentially can significantly improve
the state-of-the-art of ASP solving. Namely, the non-uniform behavior 
of programs obtained by rewriting opens a possibility of using families of 
equivalent alternative programs in algorithm selection and portfolio solving
\cite{rice-1976,XuHHL08,HoosLS14}. In fact, a recent study of suite of six 
encodings of the Hamiltonian cycle problem shows that one can train 
performance prediction models to select, given an instance, a
program from the suite to run on that instance,
guaranteeing a much better overall performance on large sets of instances 
than that of each of the six programs alone \cite{Liu-2019}.

\par
In this work, we focused on rewritings in which rules using counting based 
on explicit naming of a required number of objects are rewritten with the 
use of a \emph{counting} aggregate. We designed a software, \emph{Automated 
Aggregator} (\emph{AAgg}), for automating such rewritings into several 
equivalent forms. We studied the software's applicability and effectiveness. 
In our experiments, we applied the software to programs submitted to past 
ASP Competitions. We found that while many of them already included aggregate
expressions and \emph{AAgg} did not detect any rules to which it could apply, 
in several cases, it was applicable! For those cases, we studied the performance
of the original program and the rewritings produced by \emph{AAgg}. The results 
showed that depending on an instance, rewritings produced by our software often 
performed better than the original programs. In other words, the family of 
encodings generated by \emph{AAgg} (the original program and its rewritings) 
showed a complementary performance on both the instances used in the ASP 
Competitions and on instances which we generate ourselves. These results
show that \emph{AAgg} can be used as a tool for generating collections of 
encodings to be used in algorithm selection and portfolio solving. A systematic
experimental verification of this claim will be the subject of a future work.

\section{Aggregate Equivalence Rewriting}
In this section we describe the \emph{aggregate equivalence} rewriting, its 
input and output forms. Currently, we support one input form and three output
forms. The input form is a rule that expresses a constraint that there are $b$ 
different objects with a certain property by explicitly introducing $b$ 
variables to name these $b$ objects. The output forms model the same property 
by relying, in some way, on the counting aggregate. For each of the rewritings 
we establish its correctness and experimentally study its performance. The 
correctness follows from Theorems \ref{thm-main1} and
\ref{thm-split-replace}, presented and proved in the appendix.

\subsection{Preliminaries}
We follow the \emph{ASP-Core-2 Input Language Format}~\cite{Calimeri-2020}. We 
consider rules of the form $head \leftarrow body.$ The \emph{head} may consist
of a single literal or be empty, the latter representing a contradiction, 
making the rule a \emph{constraint}. The \emph{body} may contain one or more 
literals or be empty, which constitutes a \emph{fact}. \emph{Literals} are
composed of an atom, which may be preceded by $not$. \emph{Negative}
literals include $not$; \emph{positive} literals do not. \emph{Atoms} have 
the form $p(t_1,\dots ,t_k)$, where $p$ is a predicate symbol of arity $k$ 
and each $t_i$ is a term, that is, a constant, variable, or expression 
of the form $f(t_1,\dots ,t_k)$ where $f$ is a function symbol of arity 
$k>0$ and each $t_i$ is a term.\footnote{\emph{AAgg} also accepts other
term expressions following the definition used by \emph{clingo}
\cite{Gebser-2015}.}

\par
Atoms may also take the form of an aggregate expression. In this work, we focus
on \emph{counting} aggregates, that is, expressions of the form:
\begin{equation}
\label{eq:count-aggr}
	s_1 \prec_1 \#count \{ {\it\bf t}_1:{\it\bf L}_1{\bf ;}\ldots{\bf ;}{\it\bf t}_n:{\it\bf L}_n \} \prec_2 s_2
\end{equation} 

In (\ref{eq:count-aggr}), ${\it\bf t}_i$ and ${\it\bf L}_i$ form an \emph{aggregate element}, which is a non-empty tuple of terms and literals, respectively. 
The \emph{count} operation simply counts the number of \emph{unique} term tuples ${\it\bf t}_i$ whose corresponding condition ${\it\bf L}_i$ holds. The result of the count function is compared by the \emph{comparison predicates} $\prec_1$ and $\prec_2$ to the terms $s_1$ and $s_2$. These comparison predicates may be one of $\{<,\leq,=,\neq\}$. One or both of these comparisons can be omitted \cite{Gebser-2015}.

\subsection{Input Forms}
\label{section:input-forms}
The aggregate equivalence rewriting takes as input rules of the form:
\begin{equation}
\label{eq:input-form}
	H \leftarrow \bigwedge_{1\leq i\leq b} F(X_i,{\bf Y}), \bigwedge_{1\leq i<j\leq b} X_i \neq X_j, \ G.
\end{equation}
where 
\begin{itemize}
	\item $H$ is the head of a rule ($H$ may be empty making the rule a constraint)
	\item $F$ is a predicate of arity $1 + \bf{|Y|}$
	\item $X_1,\ldots, X_b$ are variables, all in the same position in $F$
	\item $\bf{Y}$ is a comma-separated list of variables, identical in variables and variable positions for all $F$ in the rule
	\item $G$ is the remaining body of the rule, possibly empty,
\end{itemize}
and the following hold true:
\begin{itemize}
	\item $b \geq 2$
	\item $H$, $G$, and ${\bf Y}$ have no occurrences of variables $X_1,
\ldots, X_b$
	\item The terms $\bigwedge_{1\leq i < j\leq b} X_i \neq X_j$ may instead be a continuous chain of comparisons:
	\begin{center}
	 $\bigwedge_{1 \leq i \leq b-1} X_i < X_{i+1}$ or $\bigwedge_{1 \leq i \leq b-1} X_i > X_{i+1}$.
	\end{center}
\end{itemize}

Note that $X_i$'s need not be in the first position in $F$, so long as they are all in the same position in $F$ and the other variables in $F$ (if any) are 
identical in all occurrences of $F$ in the rule. 
Additionally, some other forms logically equivalent to $\bigwedge_{1\leq i <
j\leq b} X_i \neq X_j$ are also acceptable. For instance, the condition 
$X_1 \neq X_2$ may be expressed as $X_1+a \neq X_2+a$, for some integer $a$.

\subsection{Output Forms}
The form of the output depends on the size of ${\bf Y}$.
When $|{\bf Y}| = 0$, the output form is:
\begin{equation}
\label{eq:output-form-lt}
	H \leftarrow  b \leq \#count\{X:F(X)\}, G.
\end{equation}
where 
\begin{itemize}
	\item $H$, $b$, $F$, and $G$ are as above
	\item Aggregate element $X:F(X)$ follows the form ${\it\bf term}:{\it\bf literal}$ as defined above. The prior, $X$, is a tuple of one term, which in this case is a variable. The second, $F(X)$, is a literal.
\end{itemize}

When $|{\bf Y}| > 0$, we perform \emph{projection} to project out the variable
$X$ from $F$. This gives us an output form consisting of two rules:
\begin{align}
\label{eq:output-form-lt-proj}
\begin{split}
        H  & \leftarrow  b \leq \#count\{X:F(X,{\bf Y})\}, F'({\bf Y}), G.\\
        F'({\bf Y}) & \leftarrow F(X,{\bf Y}).
\end{split}
\end{align}
where 
\begin{itemize}
	\item $H$, $b$, $F$, ${\bf Y}$, and $G$ are as above
	\item Aggregate element $X:F(X,{\bf Y})$ follows the form ${\it\bf term}:{\it\bf literal}$ as above
	\item $F'$ is a new predicate symbol of arity equal to the size of 
{\bf Y}, that is, equal to the arity of $F$ minus one; introducing $F'$
ensures that variables in $\mathbf{Y}$ are universally quantified.
\end{itemize}

The correctness of this rewriting, follows from the results presented in 
the appendix (Theorem \ref{thm-main1}). 
Specifically, we show there that if a program is obtained from another 
program by rewriting one of its rules in the way described above, then 
both programs have the same answer sets (modulo atoms $F'(\mathbf{y})$,
if $F'$ is introduced).

\subsubsection{Alternative Output Forms}
\par
Two alternative, logically equivalent output forms are also available, each derived from the output form presented above.
First, we observe that $b \leq F$ is logically equivalent to the negation of $F < b$. We can then restate the original literal as follows:
\begin{equation}
\label{eq:output-form-gt}
	not\ \#count\{X:F(X,{\bf Y})\} < b.
\end{equation}

Second, we note that the input language we consider permits integer-only 
arithmetic. Consequently, the expression $not\ a < b$ for some integers $a$ 
and $b$ is equivalent to the conjunctive expression:
\begin{center}
	$\neg(a=-\infty) \wedge \neg(a=-\infty+1) \wedge \ldots \wedge \neg(a=b-2) \wedge \neg(a=b-1)$.
\end{center}
Additionally, the result of the count never returns a negative number. Therefore, we can restate the aggregate literal in this alternative output form as a conjunction of aggregate literals having the form:
\begin{equation}
\label{eq:output-form-eq}
\begin{split}
	&not\ \#count\{X:F(X,{\bf Y})\} = 0, \\
	&not\ \#count\{X:F(X,{\bf Y})\} = 1, \\
	&\ldots, \\
	&not\ \#count\{X:F(X,{\bf Y})\} = b-1.
\end{split}
\end{equation}
Note that, due to the precise semantics of logic programs, the equivalence 
of these two alternative logic forms relies on additional assumptions about the input program (it has to be splittable) and the rule to be rewritten.
Theorem \ref{thm-split-replace} provides conditions under which the rewriting 
is guaranteed to be correct. These conditions are checked by our software and
only when they hold, the software proceeds with rewriting into an alternative
form (\ref{eq:output-form-gt}) or (\ref{eq:output-form-eq}), as selected by 
the user.

\section{The Automated Aggregator System}
We now present the \emph{Automated Aggregator}\footnote{The system and all encodings, test instances, and driver programs can be found online at:\\\url{https://drive.google.com/drive/folders/1lqRsy9HGIDHvyX_Pvkc8zKt1-xvbwcAp?usp=sharing}} (\emph{AAgg}) software system for performing the Aggregate Equivalence rewriting. The software provides an automated way to detect rules within a given program following the input format (\ref{eq:input-form}) and rewrite those rules into an equivalent output format (\ref{eq:output-form-lt}/\ref{eq:output-form-lt-proj}), (\ref{eq:output-form-gt}), or (\ref{eq:output-form-eq}).

\subsection{Usage}
The software relies on the \emph{clingo} Python module provided by the Potassco 
suite \cite{Gebser-2017}. The module is written in Python 2.7. As such, Python 
2.7 is required to run the \emph{Automated Aggregator} system. Installation 
information is provided in the software's README file. The \emph{Automated 
Aggregator} is invoked as follows: 

\begin{verbatim}
  python  aagg/main.py
          [-h,--help] [-o,--output FILENAME] [--no-rewrite] 
          [--no-prompt] [--use-anonymous-variable] 
          [--aggregate-form ID] [-d,--debug] [-r,--run-clingo]
          [encoding_1 encoding_2 ...]
\end{verbatim}

The {\tt -h} flag lists the help options. The encoding(s) are the filename(s) 
of the input encoding(s), and the output is the desired name for the output 
file. If no output filename is given, one is generated based on the first input 
filename given. When a candidate rule is discovered, the user is shown the 
proposed rewriting and prompted for confirmation. If the {\tt --no-rewrite} flag
is given, no prompts are given and no rewriting is performed. If the 
{\tt --no-prompt} flag is given, no prompts are given and rewriting is 
performed where possible.
 The {\tt ID} supplied to the {\tt --aggregate-form} argument informs the program
which aggregate form to use when performing rewrites: its values $1$, $2$, and
$3$ correspond to aggregate forms (\ref{eq:output-form-lt}/\ref{eq:output-form-lt-proj}), (\ref{eq:output-form-gt}), and (\ref{eq:output-form-eq}), respectively. The 
{\tt -d} debug flag directs the application to operate with verbosity, printing
details during the rewriting candidate discovery process and printing some 
statistics after the application's conclusion. The {\tt --r run-clingo} flag 
directs the application to run the resulting program through \emph{clingo} after any 
rewritings are performed. 

Finally, the {\tt --use-anonymous-variable} flag indicates an additional 
modification of the output form to be performed. It uses the anonymous variable 
`\_' in place of the variable $X$ in the output forms listed above, with some
additional modifications of the rule to ensure the transformation is correct. 
Specifically, we replace the aggregate element $X: F(X,{\bf Y})$ with 
$F(\_,{\bf Y}): F(\_,{\bf Y})$ rather than with $\_: F(\_,{\bf Y})$, 
because the latter is not a valid \emph{gringo} syntax. We mention this option
for completeness sake, since it is available in our implementation. However, 
we found that the programs
generated when using and when not using the anonymous variable are identical 
after grounding, so we neither discuss it further nor use these rewritings 
in our experiments.

By default all boolean flags are disabled and the aggregate-form {\tt ID} is 
set to $1$ indicating output form (\ref{eq:output-form-lt}). At least one input encoding filename must be 
specified.

\subsection{Methodology}
The methodology used for discovering whether a rule is a candidate for the aggregate equivalence rewriting is as follows. The given logic program(s) are parsed by the \emph{clingo} module, generating an abstract syntax tree for each rule. Each such tree is passed to a \emph{transformer} class for preprocessing. After preprocessing, some information is gathered from the program as a whole; specifically, predicate dependencies are determined, which in turn determine when output forms (\ref{eq:output-form-gt}) and (\ref{eq:output-form-eq}) are appropriate for a given rule (see the appendix for more details). Rules are then passed individually to an \emph{equivalence transformer} class for processing. After processing, and if the requested rewriting is possible and confirmed by the user, the rewritten form of the rule is returned. Otherwise the original rule is returned. All returned rules, rewritten or not, are collected and output to the desired output file location. Optionally, the resulting program is then run using \emph{clingo}.
\par
When a rule is passed to the equivalence transformer for processing, 
it first undergoes a process of exploration, which traverses the rule's 
abstract syntax tree, recording comparison literals between two variables as 
well as other pertinent information found along the way. These comparison literals
are scrutinized to determine whether a subset of the comparisons 
follows the form given in (\ref{eq:input-form}) or some equivalent format as detailed 
in the section \ref{section:input-forms}. The process also identifies those variables that
play the role of $X_1, \ldots, X_b$ as in (\ref{eq:input-form}). The rule is then 
analyzed to determine whether there are $b$ occurrences of some positive literal 
of predicate $F$ with the arguments $X_i$ and $\bf{Y}$, where each $X_i$,
$1\leq i\leq b$, exists at least once in the set of occurrences, and $\bf{Y}$ 
is the same for each occurrence and contains no variables $X_1, \ldots, X_b$.
Let us call this set of $b$ literals combined with the corresponding 
comparisons following the form given in (\ref{eq:input-form}) or equivalent, the rule's 
\emph{counting literals}. 
Similarly, we denote the variables playing the role of the variables $X_i$ as given in (\ref{eq:input-form}) as the rule's 
\emph{counting variables}.

After gathering counting literals and counting variables, the equivalence 
transformer verifies that the counting variables are not used within
literals anywhere in the rule excluding literals within the set of counting 
literals. If this verification fails or if any of the constraints for the 
counting literals cannot be satisfied or if no such set of counting variables 
can be obtained, then the rule is not fit for rewriting. As a result, no 
rewriting is performed on the rule and the original rule is returned.

In the other case, if the verification succeeds (the counting literals 
and the counting variables satisfy all the required constraints), then we 
proceed as follows. If the requested output form is (\ref{eq:output-form-lt}/\ref{eq:output-form-lt-proj}), then we check whether $|\mathbf{Y}|>0$ and rewrite 
the rule accordingly using (\ref{eq:output-form-lt}) or 
(\ref{eq:output-form-lt-proj}). If the requested output form is 
(\ref{eq:output-form-gt}) or (\ref{eq:output-form-eq}), 
then the predicate dependency condition related to splitting must 
be verified (see the appendix for details). If it holds, then we check whether 
$|\mathbf{Y}|>0$ and rewrite the rule into the desired form 
(\ref{eq:output-form-gt}) or (\ref{eq:output-form-eq}), adding the 
second of the two rules in (\ref{eq:output-form-lt-proj}) when 
$|\mathbf{Y}|>0$.

Note that the user is first prompted for confirmation for each rewritten rule 
before the rule is added to the final program. If the user denies a rewriting,
then the original rule is used.

\subsection{Limitations}
\label{section:limitations}
Here we discuss the limitations of the \emph{Automated Aggregator} in its 
current form. 

\begin{enumerate}
\item The rewriting is one-directional. The software will only 
rewrite rules from the form (\ref{eq:input-form}) into rules of the forms (\ref{eq:output-form-lt}/\ref{eq:output-form-lt-proj}), (\ref{eq:output-form-gt}), 
and (\ref{eq:output-form-eq}). 
As it stands, the system will not rewrite rules given in any of the forms (\ref{eq:output-form-lt}/\ref{eq:output-form-lt-proj}), (\ref{eq:output-form-gt}), or (\ref{eq:output-form-eq}) into rules of any of other form.

\item In the cases when multiple rewritings are possible for a single rule, 
only one rewriting will be detected and performed. To illustrate, if the form 
given in (\ref{eq:input-form}) occurs twice in one rule over a disjoint set of variables and 
predicates, where both sets of counting literals consider the other set as 
part of $G$ and all conditions hold, then only the set of counting literals 
with the highest number of counting variables will be used for rewriting. If both 
sets contain the same number of counting variables, then one is chosen based 
on the order in which the comparisons using those variables occur in the rule.

\item There are some cases in which the software will not recognize a valid rewriting when one exists. These cases mostly lie in the many obscure ways of representing a chain of comparisons equivalent to that in (\ref{eq:input-form}). However, there are no known cases in which an incorrect rewriting will be proposed when no valid rewriting is possible. More details are given in the software's README document. 

\end{enumerate}

\section{Experimental Analysis}
The \emph{Automated Aggregator} system has been made available for download 
online.\footnote{\url{https://drive.google.com/drive/folders/1lqRsy9HGIDHvyX_Pvkc8zKt1-xvbwcAp?usp=sharing}} Encodings with which the application was tested, their corresponding instances, and (Python) scripts for driving such tests, are included there too.

\subsection{Automated Aggregator in Practice}
The \emph{Automated Aggregator} system was applied to logic programs in 
\emph{gringo} syntax submitted to the 2009, 2014, and 2015 Answer Set 
Programming Competitions. Of the 58 encodings given to the application, 
five contained rules which were candidates for the rewriting described. 
Table \ref{asp-problem-domains-encodings-table} lists the encoding problem names and the ASP Competitions for which they were developed. 

\begin{table}
  \caption{Problem Domains of ASP Competition Encodings with Rules for Rewriting}\label{asp-problem-domains-encodings-table}
  \begin{minipage}{\textwidth}
    \begin{tabular}{ r | l }
      \hline
      {\bf ASP Competition Year } & {\bf Problem Domain of Encodings} \\
      \hline
      2009 & Golomb Ruler \& Wire Routing \\
      2014 & Steiner Tree \\
      2015 & Steiner Tree \& Graceful Graphs \\
      \hline
    \end{tabular}
  \end{minipage}
\end{table}

\par
Additionally, to provide an example of the functionality of \emph{AAgg}, the 
system was applied to an encoding for the Hamiltonian Cycle problem.
The original program is shown in Figure \ref{hamorig}. The rewritten program, as output by 
\emph{AAgg}, is shown in Figure \ref{hamrewr}. We see that the first two constraints in the program are both rewritten with the necessary projection performed as in 
(\ref{eq:output-form-lt-proj}). Experimental results for these encodings when applied to a generated set of hard instances are given in the following section.

\begin{figure}
\noindent\rule{\textwidth}{1pt}
\begin{center}
\begin{verbatim}
node(X) :- edge(X,Y).
node(X) :- edge(Y,X).

{ hc(X,Y) } :- edge(X,Y).
:- hc(X,Y), hc(X,Z), Y!=Z.
:- hc(X,Y), hc(Z,Y), X!=Z.

reach(X,Y) :- hc(X,Y).
reach(X,Y) :- hc(X,Z), reach(Z,Y).

:- node(X), node(Y), not reach(X,Y).
\end{verbatim}
\end{center}
\caption{Example Hamiltonian Cycle problem encoding. Original version.}\label{hamorig}
\noindent\rule{\textwidth}{1pt}
\end{figure}

\begin{figure}
\noindent\rule{\textwidth}{1pt}
\begin{center}
\begin{verbatim}
node(X) :- edge(X,Y).
node(X) :- edge(Y,X).

{ hc(X,Y) } :- edge(X,Y).
:- 2 <= #count{ Y : hc(X,Y) }, hc_project_Z(X).
hc_project_Z(X) :- hc(X,Y).
:- 2 <= #count { X : hc(X,Y) }, hc_project_Z1(Y).
hc_project_Z1(Y) :- hc(X,Y).

reach(X,Y) :- hc(X,Y).
reach(X,Y) :- hc(X,Z), reach(Z,Y).

:- node(X), node(Y), not reach(X,Y).
\end{verbatim}
\end{center}
\caption{Example Hamiltonian Cycle problem encoding. Rewritten version.}\label{hamrewr}
\noindent\rule{\textwidth}{1pt}
\end{figure}

\subsection{Experimental Results}
Results were gathered by systematically grounding and solving each 
instance-encoding pair within families of encodings for each problem type. 
The data sets of instances we used are available together with the 
\emph{AAgg} tool (see the url listed earlier).\footnote{We thank Daniel 
Houston and Liu Liu for providing us with the data sets of instances for 
the Latin Square and the Hamiltonian Cycle problems, respectively. The data
sets for the remaining problems were generated by software tools we developed
for the purpose. These tools and descriptions of instance sets used in 
experiments can be found at the \emph{AAgg} site.}
Each encoding in each family of encodings was run for each 
instance. The total grounding plus solving time of instance-encoding pairs were recorded and compared. 
\par
The precise encodings used as input to the \emph{AAgg} software for gathering results are listed in Table \ref{enc-sources}. Two output encodings are generated for each input encoding and together they form the encoding family for that problem domain. The two output forms used were those shown in equations (\ref{eq:output-form-lt}/\ref{eq:output-form-lt-proj}) and (\ref{eq:output-form-eq}); previous experiments showed extreme similarity between forms (\ref{eq:output-form-lt}/\ref{eq:output-form-lt-proj}) and (\ref{eq:output-form-gt}), so (\ref{eq:output-form-gt}) was left out to preserve machine time. The machine used for testing contained an Intel(R) Core(TM) i7-7700 CPU @ 3.60GHz with 16GB RAM.

\begin{table}
  \caption{Sources of Encodings Used for Experimental Results}\label{enc-sources}
  \label{sources-of-used-encodings-table}
  \begin{minipage}{\textwidth}
    \begin{tabular}{ r | l }
      \hline
      {\bf Source of Encoding } & {\bf Problem Domain of Encodings} \\
      \hline
      ASP Competition 2009 & Wire Routing \\
      ASP Competition 2015 & Steiner Tree \& Graceful Graphs \\
      Home-Brewed & Latin Squares\& Hamiltonian Cycle\footnote{We are thankful to Daniel Houston and Liu Liu for supplying the Latin Squares and Hamiltonian Cycle instance sets, respectively.} \\
      \hline
    \end{tabular}
  \end{minipage}
\end{table}

\begin{table}
  \caption{Result Statistics by Problem Domain}\label{results-stats}
  \label{Graceful Graphs}
  \begin{minipage}{\textwidth}
    \begin{tabular}{ c | r | r | r | r }
      \hline
      {\bf Encoding } & {\bf Wins} & {\bf Exclusive Wins} & {\bf Wins by 20\%} & {\bf Wins by 50\%} \\
      \hline
      \hline

      Wire Routing Input Encoding & 122 (58.4\%) & 34 (27.8\%) & 115 (92.0\%) & 97 (77.6\%) \\
      AAgg Output Form (\ref{eq:output-form-lt}/\ref{eq:output-form-lt-proj}) & 50 (24.0\%) & 26 (52.0\%) & 48 (87.3\%) & 38 (69.1\%) \\
      AAgg Output Form (\ref{eq:output-form-eq}) & 37 (17.7\%) & 13 (35.1\%) & 39 (90.7\%) & 23 (53.5\%) \\
      \hline

      Steiner Tree Input Encoding & 130 (33.9\%) & 0 (0\%) & 1 (0.8\%) & 0 (0\%) \\
      AAgg Output Form (\ref{eq:output-form-lt}/\ref{eq:output-form-lt-proj}) & 123 (32.1\%) & 0 (0\%) & 4 (3.3\%) & 0 (0\%) \\
      AAgg Output Form (\ref{eq:output-form-eq}) & 130 (33.9\%) & 0 (0\%) & 6 (4.6\%) & 0 (0\%) \\
      \hline

      Graceful Graphs Input Encoding & 212 (37.3\%) & 62 (29.2\%) & 182 (85.8\%) & 128 (60.4\%) \\
      AAgg Output Form (\ref{eq:output-form-lt}/\ref{eq:output-form-lt-proj}) & 97 (17.1\%) & 23 (23.7\%) & 82 (84.5\%) & 58 (59.8\%) \\
      AAgg Output Form (\ref{eq:output-form-eq}) & 259 (45.6\%) & 51 (19.7\%) & 229 (88.4\%) & 169 (63.7\%) \\
      \hline

      Latin Squares Input Encoding & 5611 (77.7\%) & 5 (0.1\%) & 4435 (79.0\%) & 2003 (35.7\%) \\
      AAgg Output Form (\ref{eq:output-form-lt}/\ref{eq:output-form-lt-proj}) & 1432 (19.8\%) & 0 (0\%) & 655 (45.7\%) & 86 (6.0\%) \\
      AAgg Output Form (\ref{eq:output-form-eq}) & 176 (2.4\%) & 0 (0\%) & 64 (36.4\%) & 7 (4.0\%) \\
      \hline

      Hamiltonian Cycle Input Encoding & 72 (28.7\%) & 47 (65.3\%) & 59 (81.9\%) & 34 (47.2\%) \\
      AAgg Output Form (\ref{eq:output-form-lt}/\ref{eq:output-form-lt-proj}) & 102 (40.6\%) & 69 (67.7\%) & 79 (77.5\%) & 52 (51.0\%) \\
      AAgg Output Form (\ref{eq:output-form-eq}) & 77 (30.7\%) & 45 (58.4\%) & 64 (83.1\%) & 44 (57.1\%) \\
      \hline

    \end{tabular}
  \end{minipage}
\end{table}

Result statistics are shown in Table \ref{results-stats}. Data is grouped by problem domain. The first line of each grouping shows statistics for the input encoding. The second line shows statistics for the encoding output by the \emph{AAgg}
software using the first line as input and the output rule form (\ref{eq:output-form-lt}/\ref{eq:output-form-lt-proj}) as the selected output form. The third and final line of each grouping shows statistics for the encoding output by the \emph{AAgg}
software again using the first line as input but now selecting the output form (\ref{eq:output-form-eq}) as the chosen output form. 

A win is when an encoding grounds and solves for an instance in the shortest amount of time as compared with the other two encoding in its grouping. The percentage value beside the win count is the proportion of instances which that encoding won relative to the number of instances in the instance set for which at least one of the three
encodings terminated within the total time limit set for grounding and solving. This time limit was set to 200 seconds in all experiments, except those with
encodings of the Hamiltonian Cycle problem that used a timeout value of 400 
seconds due to the relative 
difficulty of the instance set (in all problem domains except for the Hamiltonian Cycle domain, the number of 
instances for which no encoding computed an answer within the time limit was
very small; for the Hamiltonian Cycle problem, even with the increased time limit, 
it was significant). 

An exclusive win is when the encoding is the only encoding in its grouping to find a solution (or determine there is no solution) for an instance while both of the other two encodings failed to do so within the time limit. The column shows the number and the percentage of wins which were exclusive wins for the 
encoding. The next column shows the number and the percentage of wins by a
margin of at least 20\% (this is when the best encoding runs at least
20\% faster than the second one; in case the of exclusive wins, we count a
win as by at least 20\%, if it is faster by at least 20\% than the time limit 
used). The data in the last column shows the numbers and percentages of wins
by at least 50\% (it is to be interpreted similarly to the data in the previous
column).

\par
Results indicate that in some cases, the rewriting can provide complementary 
performance of encodings. Specifically, the results for the Wire Routing, 
Graceful Graphs, and Hamiltonian Cycle problems support the claim.
This is 
shown first by the fact that for each problem, each of the encodings scores
a significant proportion of wins, and that among those wins a significant
proportion are exclusive wins, and an even greater proportion (about 50\% 
or more) are wins by at least 50\%. This means that about half of the times 
when an encoding outperforms the two other encodings, it outperforms 
both by a factor of at least two. 

The Steiner Tree results indicate that sometimes the rewriting produces little 
to no effect at all.  While each encoding for the problem registers a similar 
proportion of wins, there are very few instances (just eleven out of 283) when 
the best encoding outperforms the other two by 20\% or more, and no instances
when the best encoding would outperform the other two by 50\% or more.
The Latin Squares results are mixed in the sense that for the most part the original encoding works best. However, one of the rewritings registers almost 20\%
of wins. Moreover, about 45\% of those wins are by 20\% or more and 6\% by 50\% or more.

In summary, we see that the rewriting can provide programs that perform 
complementary to the original. This complementarity is perhaps somewhat 
surprising, because aggregates are assumed to lead to better performance.
Our results show that the picture is more complicated and whether rewriting
with aggregates yields better performance is instance-dependent. It is also 
interesting to note that for some encodings replacing simple counting, like
that used in the Latin Square encodings (no two identical elements in a row 
or column), with the \emph{count} aggregates does not lead to substantial 
improvements. Finally, for some problems (in our experiments for the Steiner 
Tree problem) where complementary behavior does emerge, the differences in 
performance are relatively small.

\section{Future Work}
As detailed in Section \ref{section:limitations}, the Automated Aggregator software
can be improved. Extending the software so that it rewrites rules by 
eliminating the \emph{counting} aggregate by inverting the current 
rewriting seems to be a potentially most beneficial direction. Indeed, our 
results show that introducing aggregates does not guarantee uniformly 
improved performance. This gives reason to think that removing the 
\emph{counting} aggregate, that is, rewriting it in an explicit way, 
has a potential of generating encodings that on many instances may 
perform better. Just like the present version of the \emph{AAgg} system,
this could yield collections of encodings of complementary strengths. 
Moreover, this form of rewriting would be quite widely applicable, as 
the \emph{counting} aggregate is commonly used.

Expanding the software to detect and perform multiple aggregate equivalence 
rewritings on a single rule and to detect more obscure forms of representations are two other directions for improvement. While necessary to ensure the
software has a possibly broad scope of applicability, we do not expect these 
extensions to have a major practical impact due to low frequency with which
such less intuitive and convoluted forms of modeling are found in programs.

Automated rewriting has been studied before. The {\sc Projector} system 
\cite{Hippen-2018} and the \emph{lpopt} system \cite{bichler-2016} are
two notable examples. That earlier work sought to develop rewritings
improving the performance over the original ones. That goal is in general
difficult to meet; both systems were shown to offer gains, but the rewritten
encodings are not always performing better. Our goal was different. We aimed
at rewritings of varying relative performance depending on input instances.
With the \emph{AAgg} system we showed that even a rather simple rewriting 
consisting of introducing the \emph{count} aggregate often leads to families 
of encodings with complementary strengths (areas of superior performance). 
This opens a way for using machine learning to develop models in support 
of effective encoding selection or encoding portfolio solving \cite{Liu-2019}. 
In such work, to generate promising collections of encodings, one could use 
the \emph{AAgg} system, with extensions mentioned above, but also other 
program rewriting software such as {\sc Projector} and \emph{lpopt} which, as noted, 
while they do yield good encodings, often better than the original one, they do not perform 
uniformly better.

\bibliographystyle{eptcs}
\bibliography{AAggRef}

\begin{thebibliography}{10}
\providecommand{\bibitemdeclare}[2]{}
\providecommand{\surnamestart}{}
\providecommand{\surnameend}{}
\providecommand{\urlprefix}{Available at }
\providecommand{\url}[1]{\texttt{#1}}
\providecommand{\href}[2]{\texttt{#2}}
\providecommand{\urlalt}[2]{\href{#1}{#2}}
\providecommand{\doi}[1]{doi:\urlalt{http://dx.doi.org/#1}{#1}}
\providecommand{\bibinfo}[2]{#2}

\bibitemdeclare{inproceedings}{bichler-2016}
\bibitem{bichler-2016}
\bibinfo{author}{Manuel \surnamestart Bichler\surnameend},
  \bibinfo{author}{Michael \surnamestart Morak\surnameend} \&
  \bibinfo{author}{Stefan \surnamestart Woltran\surnameend}
  (\bibinfo{year}{2016}): \emph{\bibinfo{title}{lpopt: {A} Rule Optimization
  Tool for Answer Set Programming}}.
\newblock In \bibinfo{editor}{Manuel~V. \surnamestart Hermenegildo\surnameend}
  \& \bibinfo{editor}{Pedro \surnamestart
  L{\'{o}}pez{-}Garc{\'{\i}}a\surnameend}, editors: {\sl
  \bibinfo{booktitle}{Proceedings of the 26th International Symposium on
  Logic-Based Program Synthesis and Transformation, {LOPSTR} 2016, Selected
  Papers}}, {\sl \bibinfo{series}{Lecture Notes in Computer Science}}
  \bibinfo{volume}{10184}, \bibinfo{publisher}{Springer}, pp.
  \bibinfo{pages}{114--130}, \doi{10.1007/978-3-319-63139-4\_7}.

\bibitemdeclare{inproceedings}{Buddenhagen-2015}
\bibitem{Buddenhagen-2015}
\bibinfo{author}{Matthew \surnamestart Buddenhagen\surnameend} \&
  \bibinfo{author}{Yuliya \surnamestart Lierler\surnameend}
  (\bibinfo{year}{2015}): \emph{\bibinfo{title}{Performance Tuning in Answer
  Set Programming}}.
\newblock In \bibinfo{editor}{Francesco \surnamestart Calimeri\surnameend},
  \bibinfo{editor}{Giovambattista \surnamestart Ianni\surnameend} \&
  \bibinfo{editor}{Miroslaw \surnamestart Truszczynski\surnameend}, editors:
  {\sl \bibinfo{booktitle}{Proceedings of the 13th International Conference on
  Logic Programming and Nonmonotonic Reasoning, {LPNMR} 2015}}, {\sl
  \bibinfo{series}{Lecture Notes in Computer Science}} \bibinfo{volume}{9345},
  \bibinfo{publisher}{Springer}, pp. \bibinfo{pages}{186--198},
  \doi{10.1007/978-3-319-23264-5\_17}.

\bibitemdeclare{article}{Calimeri-2020}
\bibitem{Calimeri-2020}
\bibinfo{author}{Francesco \surnamestart Calimeri\surnameend},
  \bibinfo{author}{Wolfgang \surnamestart Faber\surnameend},
  \bibinfo{author}{Martin \surnamestart Gebser\surnameend},
  \bibinfo{author}{Giovambattista \surnamestart Ianni\surnameend},
  \bibinfo{author}{Roland \surnamestart Kaminski\surnameend},
  \bibinfo{author}{Thomas \surnamestart Krennwallner\surnameend},
  \bibinfo{author}{Nicola \surnamestart Leone\surnameend},
  \bibinfo{author}{Marco \surnamestart Maratea\surnameend},
  \bibinfo{author}{Francesco \surnamestart Ricca\surnameend} \&
  \bibinfo{author}{Torsten \surnamestart Schaub\surnameend}
  (\bibinfo{year}{2020}): \emph{\bibinfo{title}{{ASP-Core-2} Input Language
  Format}}.
\newblock {\sl \bibinfo{journal}{Theory Pract. Log. Program.}}
  \bibinfo{volume}{20}(\bibinfo{number}{2}), pp. \bibinfo{pages}{294--309},
  \doi{10.1017/S1471068419000450}.

\bibitemdeclare{inproceedings}{Ferraris-2009}
\bibitem{Ferraris-2009}
\bibinfo{author}{Paolo \surnamestart Ferraris\surnameend},
  \bibinfo{author}{Joohyung \surnamestart Lee\surnameend},
  \bibinfo{author}{Vladimir \surnamestart Lifschitz\surnameend} \&
  \bibinfo{author}{Ravi \surnamestart Palla\surnameend} (\bibinfo{year}{2009}):
  \emph{\bibinfo{title}{Symmetric Splitting in the General Theory of Stable
  Models}}.
\newblock In \bibinfo{editor}{Craig \surnamestart Boutilier\surnameend},
  editor: {\sl \bibinfo{booktitle}{Proceedings of the 21st International Joint
  Conference on Artificial Intelligence, {IJCAI 2009}}}, pp.
  \bibinfo{pages}{797--803}.
\newblock \urlprefix\url{http://ijcai.org/Proceedings/09/Papers/137.pdf}.

\bibitemdeclare{misc}{Gebser-2015}
\bibitem{Gebser-2015}
\bibinfo{author}{M.~\surnamestart Gebser\surnameend},
  \bibinfo{author}{R.~\surnamestart Kaminski\surnameend},
  \bibinfo{author}{B.~\surnamestart Kaufmann\surnameend},
  \bibinfo{author}{M.~\surnamestart Lindauer\surnameend},
  \bibinfo{author}{M.~\surnamestart Ostrowski\surnameend},
  \bibinfo{author}{J.~\surnamestart Romero\surnameend},
  \bibinfo{author}{T.~\surnamestart Schaub\surnameend} \&
  \bibinfo{author}{S.~\surnamestart Thiele\surnameend} (\bibinfo{year}{2015}):
  \emph{\bibinfo{title}{Potassco User Guide}}.
\newblock \urlprefix\url{https://github.com/potassco/guide/releases/}.

\bibitemdeclare{inproceedings}{Gebser-2011}
\bibitem{Gebser-2011}
\bibinfo{author}{Martin \surnamestart Gebser\surnameend},
  \bibinfo{author}{Roland \surnamestart Kaminski\surnameend},
  \bibinfo{author}{Benjamin \surnamestart Kaufmann\surnameend} \&
  \bibinfo{author}{Torsten \surnamestart Schaub\surnameend}
  (\bibinfo{year}{2011}): \emph{\bibinfo{title}{Challenges in Answer Set
  Solving}}.
\newblock In \bibinfo{editor}{Marcello \surnamestart Balduccini\surnameend} \&
  \bibinfo{editor}{Tran~Cao \surnamestart Son\surnameend}, editors: {\sl
  \bibinfo{booktitle}{Logic Programming, Knowledge Representation, and
  Nonmonotonic Reasoning - Essays Dedicated to Michael Gelfond on the Occasion
  of His 65th Birthday}}, {\sl \bibinfo{series}{Lecture Notes in Computer
  Science}} \bibinfo{volume}{6565}, \bibinfo{publisher}{Springer}, pp.
  \bibinfo{pages}{74--90}, \doi{10.1007/978-3-642-20832-4\_6}.

\bibitemdeclare{article}{Gebser-2017}
\bibitem{Gebser-2017}
\bibinfo{author}{Martin \surnamestart Gebser\surnameend},
  \bibinfo{author}{Roland \surnamestart Kaminski\surnameend},
  \bibinfo{author}{Benjamin \surnamestart Kaufmann\surnameend} \&
  \bibinfo{author}{Torsten \surnamestart Schaub\surnameend}
  (\bibinfo{year}{2019}): \emph{\bibinfo{title}{Multi-shot {ASP} solving with
  clingo}}.
\newblock {\sl \bibinfo{journal}{Theory Pract. Log. Program.}}
  \bibinfo{volume}{19}(\bibinfo{number}{1}), pp. \bibinfo{pages}{27--82},
  \doi{10.1017/S1471068418000054}.

\bibitemdeclare{inproceedings}{Hippen-2018}
\bibitem{Hippen-2018}
\bibinfo{author}{Nicholas \surnamestart Hippen\surnameend} \&
  \bibinfo{author}{Yuliya \surnamestart Lierler\surnameend}
  (\bibinfo{year}{2019}): \emph{\bibinfo{title}{Automatic Program Rewriting in
  Non-Ground Answer Set Programs}}.
\newblock In \bibinfo{editor}{Jos{\'{e}}~J{\'{u}}lio \surnamestart
  Alferes\surnameend} \& \bibinfo{editor}{Moa \surnamestart
  Johansson\surnameend}, editors: {\sl \bibinfo{booktitle}{Proceedings of the
  21th International Symposium on Practical Aspects of Declarative Languages,
  {PADL} 2019}}, {\sl \bibinfo{series}{Lecture Notes in Computer Science}}
  \bibinfo{volume}{11372}, \bibinfo{publisher}{Springer}, pp.
  \bibinfo{pages}{19--36}, \doi{10.1007/978-3-030-05998-9\_2}.

\bibitemdeclare{article}{HoosLS14}
\bibitem{HoosLS14}
\bibinfo{author}{Holger~H. \surnamestart Hoos\surnameend},
  \bibinfo{author}{Marius \surnamestart Lindauer\surnameend} \&
  \bibinfo{author}{Torsten \surnamestart Schaub\surnameend}
  (\bibinfo{year}{2014}): \emph{\bibinfo{title}{claspfolio 2: Advances in
  Algorithm Selection for Answer Set Programming}}.
\newblock {\sl \bibinfo{journal}{Theory Pract. Log. Program.}}
  \bibinfo{volume}{14}(\bibinfo{number}{4-5}), pp. \bibinfo{pages}{569--585},
  \doi{10.1017/S1471068414000210}.

\bibitemdeclare{inproceedings}{Lierler-2019}
\bibitem{Lierler-2019}
\bibinfo{author}{Yuliya \surnamestart Lierler\surnameend}
  (\bibinfo{year}{2019}): \emph{\bibinfo{title}{Strong Equivalence and
  Program's Structure in Arguing Essential Equivalence Between First-Order
  Logic Programs}}.
\newblock In \bibinfo{editor}{Jos{\'{e}}~J{\'{u}}lio \surnamestart
  Alferes\surnameend} \& \bibinfo{editor}{Moa \surnamestart
  Johansson\surnameend}, editors: {\sl \bibinfo{booktitle}{Proceedings of 21th
  International Symposium on Practical Aspects of Declarative Languages, {PADL}
  2019}}, {\sl \bibinfo{series}{Lecture Notes in Computer Science}}
  \bibinfo{volume}{11372}, \bibinfo{publisher}{Springer}, pp.
  \bibinfo{pages}{1--18}, \doi{10.1007/978-3-030-05998-9\_1}.

\bibitemdeclare{inproceedings}{Lifschitz-1994}
\bibitem{Lifschitz-1994}
\bibinfo{author}{Vladimir \surnamestart Lifschitz\surnameend} \&
  \bibinfo{author}{Hudson \surnamestart Turner\surnameend}
  (\bibinfo{year}{1994}): \emph{\bibinfo{title}{Splitting a Logic Program}}.
\newblock In \bibinfo{editor}{Pascal~Van \surnamestart Hentenryck\surnameend},
  editor: {\sl \bibinfo{booktitle}{Proceedings of the 11th International
  Conference on Logic Programming, ICLP 1994}}, \bibinfo{publisher}{{MIT}
  Press}, pp. \bibinfo{pages}{23--37}.

\bibitemdeclare{inproceedings}{Liu-2019}
\bibitem{Liu-2019}
\bibinfo{author}{Liu \surnamestart Liu\surnameend} \& \bibinfo{author}{Miroslaw
  \surnamestart Truszczynski\surnameend} (\bibinfo{year}{2019}):
  \emph{\bibinfo{title}{Encoding Selection for Solving {Hamiltonian Cycle}
  Problems with {ASP}}}.
\newblock In \bibinfo{editor}{Bart \surnamestart Bogaerts\surnameend},
  \bibinfo{editor}{Esra \surnamestart Erdem\surnameend}, \bibinfo{editor}{Paul
  \surnamestart Fodor\surnameend}, \bibinfo{editor}{Andrea \surnamestart
  Formisano\surnameend}, \bibinfo{editor}{Giovambattista \surnamestart
  Ianni\surnameend}, \bibinfo{editor}{Daniela \surnamestart
  Inclezan\surnameend}, \bibinfo{editor}{Germ{\'{a}}n \surnamestart
  Vidal\surnameend}, \bibinfo{editor}{Alicia \surnamestart
  Villanueva\surnameend}, \bibinfo{editor}{Marina~De \surnamestart
  Vos\surnameend} \& \bibinfo{editor}{Fangkai \surnamestart Yang\surnameend},
  editors: {\sl \bibinfo{booktitle}{Proceedings of the 35th International
  Conference on Logic Programming, {ICLP} 2019, Technical Communications}},
  {\sl \bibinfo{series}{{EPTCS}}} \bibinfo{volume}{306}, pp.
  \bibinfo{pages}{302--308}, \doi{10.4204/EPTCS.306.35}.

\bibitemdeclare{article}{rice-1976}
\bibitem{rice-1976}
\bibinfo{author}{John~R. \surnamestart Rice\surnameend} (\bibinfo{year}{1976}):
  \emph{\bibinfo{title}{The Algorithm Selection Problem}}.
\newblock {\sl \bibinfo{journal}{Advances in Computers}} \bibinfo{volume}{15},
  pp. \bibinfo{pages}{65--118}, \doi{10.1016/S0065-2458(08)60520-3}.

\bibitemdeclare{article}{XuHHL08}
\bibitem{XuHHL08}
\bibinfo{author}{Lin \surnamestart Xu\surnameend}, \bibinfo{author}{Frank
  \surnamestart Hutter\surnameend}, \bibinfo{author}{Holger~H. \surnamestart
  Hoos\surnameend} \& \bibinfo{author}{Kevin \surnamestart
  Leyton{-}Brown\surnameend} (\bibinfo{year}{2008}):
  \emph{\bibinfo{title}{SATzilla: Portfolio-based Algorithm Selection for
  {SAT}}}.
\newblock {\sl \bibinfo{journal}{J. Artif. Intell. Res.}} \bibinfo{volume}{32},
  pp. \bibinfo{pages}{565--606}, \doi{10.1613/jair.2490}.

\end{thebibliography}

\appendix
\section{Correctness of Rewritings}

Let us consider the following program rule ($H$ may be $\bot$):
\begin{equation}
\label{eq3}
H \leftarrow \bigwedge_{1\leq i\leq b} F(X_i,\bm{Z},\bm{Z'}),
 Q(\bm{X}), G(\bm{Z'},\bm{Z''}).
\end{equation}
where $F$ is a predicate, $\bm{X}$ a tuple of variables $X_1,\ldots, X_b$,
$Q(\bm{X})$
is a list of literals over variables $X_1,\ldots, X_b$, and $\bm{Z}, 
\bm{Z'}$ and $\bm{Z''}$ are three pairwise disjoint tuples of variables, 
and disjoint with $\bm{X}$, and $H$ contains none of $X_i$.

Let $P$ be a program containing rule (\ref{eq3}) and let $P'$ be the program
obtained from $P$ by replacing that rule with the two rules
\begin{align}
\label{eq4}
\begin{split}
H &\leftarrow \bigwedge_{1\leq i\leq b} F(X_i,\bm{Z},\bm{Z'}), 
Q(\bm{X}), G(\bm{Z'},\bm{Z''}), F'(\bm{Z}).\\
F'(\bm{Z}) &\leftarrow F(X,\bm{Z},\bm{Z'}).
\end{split}
\end{align} 
where $F'$ is a predicate not occurring in $P$.

\begin{theorem}
\label{thm-add-domain}
The programs $P$ and $P'$ have the same answer sets modulo ground atoms of the
form $F'(\bm{z})$.
\end{theorem}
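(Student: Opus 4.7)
The plan is to establish a bijection between the answer sets of $P$ and those of $P'$, modulo atoms of the form $F'(\bm{z})$, by means of a canonical lifting map. Define
\[
D(M) = \{F'(\bm{z}) : \exists\, x, \bm{z'}.\ F(x,\bm{z},\bm{z'}) \in M\},
\]
and set $\phi(M) = M \cup D(M)$. The goal will be to prove that $M$ is an answer set of $P$ iff $\phi(M)$ is an answer set of $P'$, and that every answer set of $P'$ arises in this way.

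First I would observe that $F'$ is a fresh predicate which appears in $P'$ only in the head of the second rule of (\ref{eq4}) and positively in the body of the first rule of (\ref{eq4}); it never occurs under negation, and it does not occur in $P$ at all. Consequently, for any answer set $N$ of $P'$, the extension of $F'$ in $N$ is forced: $F'(\bm{z}) \in N$ iff $F'(\bm{z})$ is supported by some ground instance of the second rule of (\ref{eq4}), i.e., iff there exist $x,\bm{z'}$ with $F(x,\bm{z},\bm{z'}) \in N$. This pins down the $F'$-part of every answer set of $P'$ to $D(N \setminus \{F'(\bm{z})\}_{\bm{z}})$, so it suffices to match the non-$F'$ parts.

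Next I would compare the Gelfond--Lifschitz reducts $P^M$ and $(P')^{\phi(M)}$. Because no negated $F'$-literals occur in $P'$, the reduct $(P')^{\phi(M)}$ differs from $P^M$ in exactly two ways: (i) the ground instances of rule (\ref{eq3}) are replaced by the ground instances of the first rule of (\ref{eq4}) (i.e., the same body plus the positive literal $F'(\bm{z})$), and (ii) the ground instances of the second rule of (\ref{eq4}) are added. The crucial observation is that in any ground instance of the augmented rule, the body already contains $F(x_1,\bm{z},\bm{z'}),\ldots,F(x_b,\bm{z},\bm{z'})$ (with $b\geq 1$, which is ensured by $b\geq 2$ in the input form), so whenever the rest of the body fires, the second rule of (\ref{eq4}) already produces $F'(\bm{z})$; thus the added literal $F'(\bm{z})$ is redundant in the least model computation, and the ground instances of (\ref{eq3}) and of the first rule of (\ref{eq4}) fire on exactly the same assignments and produce the same $H$-atoms.

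From here the conclusion follows routinely: if $M$ equals the least model of $P^M$, then $\phi(M)$ equals the least model of $(P')^{\phi(M)}$, because iterating the immediate consequence operator of $(P')^{\phi(M)}$ produces the same non-$F'$ atoms as iterating that of $P^M$, while the second rule of (\ref{eq4}) derives precisely $D(M)$; conversely, any answer set $N$ of $P'$ satisfies $N = \phi(N \setminus \{F'(\bm{z})\}_{\bm{z}})$ by the support argument above, and its non-$F'$ restriction is an answer set of $P$. The main obstacle I anticipate is a clean, bookkeeping-level treatment of the reduct that keeps track of the variables $\bm{Z}$, $\bm{Z'}$, $\bm{Z''}$ and $\bm{X}$ correctly while verifying that the redundant literal $F'(\bm{Z})$ really is implied by the $b$ occurrences of $F$ in the body; this is conceptually straightforward but notationally delicate, and care is needed to handle any negations that may occur in $Q(\bm{X})$ or $G(\bm{Z'},\bm{Z''})$, which do not interact with $F'$ and so pass through the reduct unchanged.
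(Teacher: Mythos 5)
Your proposal is correct and follows essentially the same route as the paper's own (sketched) proof: both establish the correspondence $I \mapsto I \cup J$ where $J$ collects the atoms $F'(\bm{z})$ supported by some $F(x,\bm{z},\bm{z'}) \in I$, and both rest on the observation that the added positive literal $F'(\bm{Z})$ is redundant in the reduct's least-model computation because the body of the rewritten rule already contains occurrences of $F(X_i,\bm{Z},\bm{Z'})$ that trigger the new defining rule for $F'$. Your version merely spells out in more detail the supportedness argument fixing the $F'$-extension and the reduct comparison that the paper leaves implicit.
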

\begin{proof} (Sketch) Consider a ground instance $r$ of rule (\ref{eq3}) and
let $a$ be the first atoms in the body of $r$ (that is, a ground instance of 
the atom $F(X_1,\bm{Z},\bm{Z'})$). In $P'$ there are ground rules $r'$ and 
$r''$ 
obtained from (\ref{eq4}) using the same variable instantiation as that 
used to produce $r$. Clearly, $r$ contributes to the reduct of $\grnd(P)$
if and only if $r'$ and $r''$ contribute to the reduct of $\grnd(P')$.
Moreover, if they do, $r$ ``fires'' in the least model computation if and
only if $r'$ fires in the least model computation. It follows that an 
interpretation $I$ of $P$ is an answer set of $P$ if and only if $I\cup J$
is an answer set of $P'$, where $J$ consists of all atoms $F'(\bm{z})$ such 
that $F(x,\bm{z},\bm{z'}) \in I$, for some constant $x$ and a tuple of 
constants $\bm{z'}$.
\end{proof}

Next, we recall the following theorem proved by Lierler \cite{Lierler-2019}.

\begin{theorem}\label{yu-li}
Let $H$ be an atom ($H$ may be $\bot$), $G$ a list of literals, $X$ 
a variable, $\bm{Z}$ a tuple of variables, each different from $X$ and each
with at least one occurrence in a literal in $G$, and $F$ a predicate symbol 
of arity $1+|\bm{Z}|$. If $b$ is an integer, and $X_1,\ldots, X_b$ are 
variables without any occurrence in $H$ and $G$, then the logic program rule 
\begin{equation}
\label{eq2}
H \leftarrow \bigwedge_{1\leq i\leq b} F(X_i,\bm{Z}), \bigwedge_{1 \leq i < j 
\leq b} X_i\neq X_j, G.
\end{equation}
is strongly equivalent to the logic program rule
\begin{equation}
\label{eq1}
H \leftarrow  b \leq \#count\{X:F(X,\bm{Z})\}, G.
\end{equation}
where $\bigwedge$ is used to represent a sequence of expressions separated 
by commas.
\end{theorem}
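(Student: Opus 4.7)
My plan is to establish strong equivalence via the HT-model characterization extended to programs with aggregates (the same machinery used implicitly by Lierler). By that characterization, two rules are strongly equivalent iff their ground instantiations have coinciding HT-models, so I would fix an arbitrary ground substitution $\sigma$ for $\bm{Z}$ and for the free variables of $G$. Since $X_1,\ldots,X_b$ occur neither in $H$, nor in $G$, nor in $\bm{Z}$, the ground instances of rule (\ref{eq2}) under $\sigma$ range over all tuples $(x_1,\ldots,x_b)$ of ground terms, while rule (\ref{eq1}) contributes a single ground instance. The task therefore reduces to showing that this collection of ground rules and this single ground rule have the same HT-models.

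For the classical two-valued direction, the body of the ground instance of (\ref{eq2}) corresponding to a tuple $(x_1,\ldots,x_b)$ is satisfied in an interpretation $I$ precisely when the $x_i$ are pairwise distinct constants, each $F(x_i,\bm{z})$ lies in $I$, and $G\sigma$ holds in $I$. Ranging over all admissible tuples, the collective effect on $I$ is to require $H\sigma$ exactly when at least $b$ distinct $x$ satisfy $F(x,\bm{z}) \in I$ and $G\sigma$ holds. The body of the single ground instance of (\ref{eq1}) imposes exactly this condition through the semantics of $\#count$, so the two rule sets have the same classical models.

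The main technical obstacle is lifting this classical equivalence to HT, which is what is actually needed for strong equivalence inside the reduct construction. The key leverage is monotonicity: the aggregate $b \leq \#count\{X:F(X,\bm{Z})\}$ is monotone in $F$, and the atoms $F(X_i,\bm{Z})$ in (\ref{eq2}) occur only positively, so under the Ferraris-style aggregate semantics adopted in ASP-Core-2, satisfaction of either side by an HT-pair $\langle H,T\rangle$ reduces to classical satisfaction at the ``here'' world $H$ (the negative-literal portion of $G$ contributes identically to both sides and cancels). Combined with the classical equivalence of the previous paragraph, this yields the desired HT-equivalence and hence strong equivalence. The two pieces that need to be supplied carefully are the precise choice of aggregate semantics and a formal verification of the ``monotone-at-here'' claim within that semantics; everything else is routine bookkeeping over ground tuples.
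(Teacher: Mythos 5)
The first thing to note is that the paper does not prove this statement at all: Theorem~\ref{yu-li} is imported verbatim from Lierler's work \cite{Lierler-2019}, so there is no in-paper proof to compare against. Your route --- ground the two rules under a fixed substitution for $\bm{Z}$ and the variables of $G$, observe that (\ref{eq2}) explodes into one ground rule per tuple $(x_1,\ldots,x_b)$ while (\ref{eq1}) yields a single ground rule, and then argue HT-equivalence of the resulting ground theories --- is the natural and, as far as the characterization of strong equivalence for programs with aggregates goes, essentially the only available route; it is in the spirit of what Lierler does. The classical half of your argument is fine, as is the reduction of strong equivalence to coincidence of HT-models and the observation that the collection of ground instances of (\ref{eq2}) is HT-equivalent to a single rule whose body is the disjunction of the individual bodies.

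The gap is that the step you defer --- the ``monotone-at-here'' claim --- is the entire mathematical content of the theorem, and as you state it, it is not quite right. Under the Ferraris-style semantics, the ground aggregate $b \leq \#count\{X:F(X,\bm{z})\}$ is translated into a conjunction of implications $\bigwedge_{a\in S} a \to \bigvee_{a\notin S} a$ over all subsets $S$ of the relevant instances of $F$ with $|S|<b$; this is not a positive formula, so HT-satisfaction does not literally ``reduce to classical satisfaction at the here world'' by syntactic inspection. What you actually need is the lemma that for a monotone aggregate this translation is HT-equivalent to the disjunction, over all minimal satisfying subsets $S$ (here, exactly the $b$-element subsets), of the conjunctions $\bigwedge_{a\in S} a$ --- which is then syntactically the disjunction of the bodies of the ground instances of (\ref{eq2}), with the ground inequalities $x_i\neq x_j$ filtering out the non-injective tuples. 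Proving that lemma (or citing it) is where the real work lies, and it also subsumes your worry about which aggregate semantics is in force, since the standard semantics (Ferraris, Faber et al., ASP-Core-2) agree on monotone aggregates. Until that piece is supplied, the proposal is a correct plan rather than a proof.
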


Combining the two results leads to the following result that proves the
correctness of the first rewriting implemented by our tool \emph{AAgg}.

\begin{theorem}
\label{thm-main1}
Let $P$ be a program containing a rule $r$ of the form 
\begin{equation}
\label{eq6}
H \leftarrow \bigwedge_{1\leq i\leq b} F(X_i,\bm{Z},\bm{Z'}), \bigwedge_{1\leq i<j\leq b} X_i\neq X_j,\ \ G(\bm{Z'},\bm{Z''}).
\end{equation}
under the same assumptions about variable tuples $\bm{X}, \bm{Z}, \bm{Z'}$
and $\bm{Z''}$ as before. If $\bm{Z}$ is empty, let $P'$ be $P$. Otherwise,
let $P'$ be obtained from $P$ by replacing $r$ with the rules (\ref{eq4}),
adjusting the first of them to contain $\bigwedge_{1\leq i<j\leq b} X_i\neq 
X_j$ in place of $Q(\bm{X})$. Let us call the first of these two rules $r$ 
(reusing the name of the original rule). Finally, let $P''$ be obtained from 
$P'$ by replacing $r$ with the corresponding rule (\ref{eq1}). Then, the 
programs $P$ and $P'$ have the same answer sets (in the case, when $\bm{Z}$ 
is not empty, the same answer sets modulo atoms $F'(\bm{z})$).
\end{theorem}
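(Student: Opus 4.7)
The plan is to prove the theorem by decomposing the full rewriting into the two stages it was defined in: first the ``projection'' step that introduces the auxiliary domain predicate $F'$, and then the ``aggregation'' step that replaces the explicit enumeration by a counting aggregate. Each stage is precisely what one of the two preceding theorems is designed to handle, so the proof is essentially a careful invocation of Theorem \ref{thm-add-domain} followed by Theorem \ref{yu-li}, with the main work being to verify that the variable-partition hypotheses of those theorems line up with the partition $\bm{X},\bm{Z},\bm{Z'},\bm{Z''}$ appearing in (\ref{eq6}).

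I would first dispose of the trivial case $\bm{Z}$ empty. There $P' = P$, and rule (\ref{eq6}) collapses to exactly the left-hand form (\ref{eq2}) of Theorem \ref{yu-li}, taking that theorem's $\bm{Z}$ to be our $\bm{Z'}$ and its $G$ to be $G(\bm{Z'},\bm{Z''})$. Every variable of $\bm{Z'}$ occurs in $G(\bm{Z'},\bm{Z''})$ by the setup of the partition, so Theorem \ref{yu-li} applies and yields strong equivalence between $r$ and its aggregate rewriting; since replacing a rule by a strongly equivalent one preserves answer sets, $P$ and $P''$ coincide on answer sets, matching the claim (no $F'$ atoms are introduced).

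For the nontrivial case where $\bm{Z}$ is nonempty, I would proceed in two steps. Step one applies Theorem \ref{thm-add-domain} with $Q(\bm{X})$ instantiated to $\bigwedge_{1\leq i<j\leq b} X_i\neq X_j$, which is a list of literals over $X_1,\dots,X_b$ as required, giving immediately that $P$ and $P'$ have the same answer sets modulo atoms $F'(\bm{z})$. Step two applies Theorem \ref{yu-li} to the (first) rule $r$ of $P'$, taking that theorem's $\bm{Z}$ to be the concatenation of our $\bm{Z}$ and $\bm{Z'}$ and its $G$ to be $G(\bm{Z'},\bm{Z''}), F'(\bm{Z})$. The crucial check is the variable-occurrence hypothesis: every variable of our $\bm{Z'}$ occurs in $G(\bm{Z'},\bm{Z''})$ by hypothesis, and every variable of our $\bm{Z}$ occurs in the newly introduced literal $F'(\bm{Z})$ --- and this is precisely the reason the projection step was needed in the first place. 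The aggregate rewriting of $r$ inside $P'$ therefore yields a strongly equivalent program $P''$, so $P'$ and $P''$ have the same answer sets, and chaining this with step one finishes the argument.

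The main obstacle, although really one of bookkeeping rather than of mathematical depth, will be keeping the three variable tuples $\bm{Z},\bm{Z'},\bm{Z''}$ correctly aligned through the different theorem statements, especially since Theorem \ref{yu-li} uses a single tuple $\bm{Z}$ while Theorems \ref{thm-add-domain} and \ref{thm-main1} split the arguments of $F$ into the ``projected'' part $\bm{Z}$ and the ``shared with $G$'' part $\bm{Z'}$. Once the identifications are spelled out explicitly --- in particular that the role of $F'(\bm{Z})$ in step two is exactly to supply the body occurrences of $\bm{Z}$ demanded by Theorem \ref{yu-li} --- both invocations are immediate and the conclusion follows by transitivity of answer-set equivalence (modulo the fresh $F'$ atoms).
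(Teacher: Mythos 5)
Your proposal is correct and follows essentially the same route as the paper's own proof: first invoke Theorem \ref{thm-add-domain} (with $Q(\bm{X})$ taken to be the inequality conjunction) to relate $P$ and $P'$ modulo the $F'(\bm{z})$ atoms, then invoke Theorem \ref{yu-li} on the modified rule in $P'$ to relate $P'$ and $P''$. Your explicit verification that the occurrence hypothesis of Theorem \ref{yu-li} is met --- with $F'(\bm{Z})$ supplying the required body occurrences of $\bm{Z}$ and $G(\bm{Z'},\bm{Z''})$ those of $\bm{Z'}$ --- is a welcome elaboration of a step the paper leaves implicit.
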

\begin{proof}
Clearly, rule $r$ is a special case of a rule of the form (\ref{eq3}). By
\ref{thm-add-domain}, programs $P$ and $P'$ have the same answer 
sets (when $\bm{Z}$, is not empty, the same answer sets modulo atoms 
$F'(\bm{z})$). The rule $r$ in
$P'$ is of the form (\ref{eq2}) required by Theorem \ref{yu-li}. Applying this
theorem shows that programs $P'$ and $P''$ have the same answer sets and the 
assertion follows. \\ \mbox{\ }\hfill \end{proof}
 
Once a program has a rule of the form (\ref{eq1}), we can often modify it 
further by exploiting alternative encodings of the aggregate expressions. 
In particular, under some assumptions about the structure of the program, 
we can replace a rule (\ref{eq1}) by 
\begin{equation}
\label{eq7}
H \leftarrow not\: \#count\{X:F(X,\bm{Z})\} < b, G.
\end{equation}
where we assume that the variable $X$ is not a variable in $\bm{Z}$, and 
that all variables in $\bm{Z}$ appear in $G$.

We recall that a partition $(P_b,P_t)$ of a program $P$ is a \emph{splitting} 
of $P$ if no predicate appearing in the head of a rule from $P_t$ appears in
$P_b$ \cite{Lifschitz-1994,Ferraris-2009}. A well-known result on splitting
states that answer set of programs that have a splitting can be described 
in terms of answer sets of programs that form the splitting.

\begin{theorem}
\label{thm-split}
Let $P$ be a logic program and let $(P_b,P_t)$ be a splitting of $P$.
For every answer set $I_b$ of $P_b$, every answer set of the program
$P_t\cup I_b$ is an answer set of $P$. Conversely, for every answer set
$I$ of $P$, there is an answer set $I_b$ of $P_b$ such that $I$ is
an answer set of $P_t\cup I_b$. 
\end{theorem}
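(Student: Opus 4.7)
The plan is to work directly with the reduct characterization of answer sets: an interpretation $I$ is an answer set of a program $P$ iff $I$ is the least model of $\grnd(P)^I$. I would exploit the splitting condition, namely that no predicate appearing in the head of a rule from $P_t$ occurs anywhere in $P_b$, through two observations. First, $\grnd(P)^I = \grnd(P_b)^I \cup \grnd(P_t)^I$ and, setting $I_b := I \cap \mathrm{At}(P_b)$, where $\mathrm{At}(P_b)$ denotes the ground atoms over predicates occurring in $P_b$, the first summand simplifies to $\grnd(P_b)^{I_b}$ because negative literals in $P_b$-rules mention only $P_b$-predicates. Second, atoms outside $\mathrm{At}(P_b)$ can only be supported by rules from $\grnd(P_t)^I$, so the least-model computation of $\grnd(P)^I$ separates cleanly into a $P_b$-stage followed by a $P_t$-stage.

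For the forward direction, let $I_b$ be an answer set of $P_b$ and let $I$ be an answer set of $P_t \cup I_b$, with $I_b$ viewed as unit facts adjoined to $P_t$. I would first argue $I \cap \mathrm{At}(P_b) = I_b$: the inclusion $\supseteq$ is immediate since $I_b$ consists of facts, and $\subseteq$ holds because the only rules in $P_t \cup I_b$ supporting $P_b$-predicate atoms are the facts in $I_b$ themselves. Using this, I would rewrite $\grnd(P)^I = \grnd(P_b)^{I_b} \cup \grnd(P_t)^I$ and verify that $I$ is its least model by saturating in two stages: the first component yields exactly $I_b$ because $I_b$ is an answer set of $P_b$, and then saturating the second component with $I_b$ adjoined as facts yields exactly $I$ because $I$ is an answer set of $P_t \cup I_b$.

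For the reverse direction, let $I$ be an answer set of $P$ and set $I_b := I \cap \mathrm{At}(P_b)$. I would first show that $I_b$ is the least model of $\grnd(P_b)^{I_b}$: it is a model because $I$ satisfies $\grnd(P_b)^I = \grnd(P_b)^{I_b}$ and heads of these rules lie in $\mathrm{At}(P_b)$, and if some $M \subsetneq I_b$ were a strictly smaller model, then $M \cup (I \setminus I_b)$ would be a model of $\grnd(P)^I$ strictly below $I$, contradicting the minimality of $I$. To see $I$ is an answer set of $P_t \cup I_b$, I would compute $(P_t \cup I_b)^I = \grnd(P_t)^I \cup I_b$ and check that its least model coincides with $I$, again by the stage-wise saturation argument with $I_b$ supplied as facts. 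The main obstacle, I expect, is the bookkeeping needed to justify that the least-model computation of the combined reduct really does decompose into these two stages and that no atom from one side can be supported using rules from the other; once the syntactic disjointness imposed by the splitting definition is invoked, the remaining work is largely definitional.
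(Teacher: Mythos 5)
The paper does not actually prove Theorem~\ref{thm-split}: it is recalled as a well-known result and attributed to Lifschitz--Turner and to Ferraris et al.\ \cite{Lifschitz-1994,Ferraris-2009}, so there is no in-paper proof to compare against. Your argument is, in essence, the standard direct proof of the splitting theorem for normal programs under the Gelfond--Lifschitz reduct, and the skeleton is sound: the identities $\grnd(P)^I=\grnd(P_b)^{I_b}\cup\grnd(P_t)^I$ and $I\cap \mathrm{At}(P_b)=I_b$, together with the observation that heads of $\grnd(P_t)$ lie outside $\mathrm{At}(P_b)$ while all atoms of $\grnd(P_b)$ lie inside it, do make the least-model computation stratify into a $P_b$-stage followed by a $P_t$-stage, and your minimality arguments in both directions (in particular, patching a smaller model $M$ of one layer into $M\cup(I\setminus I_b)$, using that reduced bodies are monotone in their positive part) go through.

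Two caveats are worth flagging. First, the ``bookkeeping'' you defer is exactly where the content lives, so in a full write-up you would need to state and prove the decomposition lemma (least model of $\grnd(P_b)^{I_b}\cup\grnd(P_t)^I$ restricted to $\mathrm{At}(P_b)$ equals the least model of $\grnd(P_b)^{I_b}$, and the whole least model equals that of $\grnd(P_t)^I\cup I_b$) rather than gesture at it. Second, and more substantively for this paper: the theorem is applied to programs whose $P_t$-rules contain \emph{count} aggregates (and whose $P_b$-part may contain choice rules), for which ``$I$ is the least model of $\grnd(P)^I$'' is not the operative definition of answer set; the reduct and the monotonicity facts you rely on must be replaced by their analogues in the more general stable-model semantics, which is precisely why the paper cites the symmetric splitting result of Ferraris et al.\ rather than the classical normal-program version. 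Your proof is therefore correct for the normal-program case but does not, as written, cover the generality in which the theorem is actually used here.
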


This result implies that in programs that have a splitting, a rule
in $P_t$ containing in its body an aggregate expression involving only
predicates appearing in $P_b$ can be replaced by a rule in which this
aggregate expression is replaced by any of its (classically) equivalent 
forms. We formally state this result for the case of rules of the form
(\ref{eq1}).

\begin{theorem}
\label{thm-split-replace}
Let $P$ be a logic program and let $(P_b,P_t)$ be a splitting of $P$.
If $P_t$ contains a rule of the form (\ref{eq1}) and $F$ appears in $P_b$,
then $P$ and the program $P'$ obtained from $P$ by replacing the rule 
(\ref{eq1}) by the rule (\ref{eq7}) have the same answer sets.
\end{theorem}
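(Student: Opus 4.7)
The plan is to apply the splitting theorem (Theorem \ref{thm-split}) to both $P$ and $P'$ and thereby reduce the claim to a per-bottom-answer-set comparison. The rewriting only alters a rule whose head is in $P_t$, and the head predicates of $P_t$ are unchanged, so $(P_b, P_t')$ is still a splitting of $P'$ with the same bottom part $P_b$. Consequently, by Theorem \ref{thm-split}, it suffices to show that for every answer set $I_b$ of $P_b$, the programs $P_t \cup I_b$ and $P_t' \cup I_b$ have the same answer sets.

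Fix such an $I_b$. Since $(P_b, P_t)$ is a splitting and $F$ appears in $P_b$, the predicate $F$ does not occur in the head of any rule in $P_t$ (nor in $P_t'$, since the rewriting only modifies the body of rule (\ref{eq1})). Therefore, for every answer set $I$ of either $P_t \cup I_b$ or $P_t' \cup I_b$, the $F$-atoms in $I$ are exactly the $F$-atoms in $I_b$. For each tuple of constants $\bm{z}$, let $n_{\bm{z}} = |\{x : F(x,\bm{z}) \in I_b\}|$; this value is a fixed non-negative integer determined by $I_b$ alone.

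I would then analyze a single ground instance of the rewritten rule obtained by substituting $\bm{z}$ for $\bm{Z}$ in (\ref{eq1}) and (\ref{eq7}). Under any interpretation extending $I_b$, the body literal in (\ref{eq1}) evaluates to true iff $b \leq n_{\bm{z}}$, while the body literal in (\ref{eq7}) evaluates to true iff $\neg(n_{\bm{z}} < b)$. Since $n_{\bm{z}}$ is a non-negative integer, these two conditions are logically equivalent, and the other body literals $G$ and head $H$ are identical in both rules. Thus every ground instance of the rule from (\ref{eq1}) fires, under any such interpretation, iff the corresponding ground instance of (\ref{eq7}) fires, and the two programs define the same derivability relation on top of $I_b$.

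The main obstacle will be making rigorous the claim that replacing the aggregate literal by a classically equivalent one preserves answer sets. In general, aggregate literals interact with the reduct (FLP or the ASP-Core-2 aggregate semantics) in a delicate, non-monotone manner, and classical equivalence is not enough. The reason it works here is precisely the splitting hypothesis: because $F$ is wholly defined in $P_b$, the value $n_{\bm{z}}$ is externally fixed for every candidate model of $P_t \cup I_b$, so the aggregate expression behaves as an evaluated constant rather than a predicate-dependent condition. Once this is made explicit, the classical equivalence $b \leq n_{\bm{z}} \Leftrightarrow \neg(n_{\bm{z}} < b)$ suffices, and combining this per-$\bm{z}$ equivalence with Theorem \ref{thm-split} yields the desired identity of answer sets of $P$ and $P'$.
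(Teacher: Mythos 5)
Your proposal is correct and follows essentially the same route as the paper: both reduce the claim via the splitting theorem to comparing $P_t\cup I_b$ with $P_t'\cup I_b$ for each answer set $I_b$ of $P_b$, and both hinge on the observation that, since $F$ is defined entirely in $P_b$, the aggregate evaluates to a value fixed by $I_b$, so the classical equivalence $b\leq n \Leftrightarrow \neg(n<b)$ suffices. The paper makes the "evaluated constant" step precise by simplifying $\grnd(P_t)$ and $\grnd(P_t')$ relative to $I_b$ into identical programs $Q=Q'$, which is exactly the formalization you flag as the remaining obstacle.
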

\begin{proof} (Sketch)
Let $P_t'$ be the program obtained from $P_t$ by replacing the rule (\ref{eq1})
by the rule (\ref{eq7}). It is clear that $(P_b,P_t')$ is a splitting of $P'$. 
Let $I$ be an answer set of $P$. By \ref{thm-split}, there is an answer 
set $I_b$ of $P_b$ such that $I$ is an answer set of $P_t \cup I_b$. In 
particular, $I$ is an answer set of the program $I_b\cup \grnd(P_t)$. Let $Q$ 
be the program obtained by simplifying the bodies of the rules in $\grnd(P_t)$ 
as follows. If a conjunct $c$ in the body of a rule in $\grnd(P_t)$ involves 
only atoms from the Herbrand base $HB(P_b)$ of $P_b$, we remove $c$ if 
$I_b\models c$, and we remove the rule, if $I_b\not\models c$. Because atoms
from $HB(P_b)$ do not appear in the heads of the rules in $\grnd(P_t)$, 
$I$ is an answer set of $Q\cup I_b$.

We denote by $Q'$ the program obtained by the same simplification process 
from $\grnd(P_t')$. From the definition of $P_t'$ it follows that $Q'=Q$
(indeed, the only difference between $P_t$ and $P_t'$ is in the bodies of 
some rules, in which an aggregate built entirely from the atoms in $HB(P_b)$ 
is replaced by a classically equivalent one; thus, both expressions evaluate 
in the same way under $I_b$ and the contribution of the corresponding rules
to $Q$ and $Q'$ in each case is the same). Consequently, $I$ is an answer 
set of $Q'\cup I_b$. Because atoms from $HB(P_b)$ do not appear in the heads 
of the rules in $\grnd(P_t')$, $I$ is an answer set of $\grnd(P_t')\cup I_b$ 
and, because $(P_b,P_t')$ is a splitting of $P'$, also an answer set of $P'$. 
A similar argument shows that answer sets of $P'$ are also answer sets of $P$.
\end{proof}

It is clear that the same argument applies to other similar rewritings, for 
instance, to the one that replaces the rule (\ref{eq2}) by the rule
\begin{equation}
\label{eq8}
H \leftarrow \bigwedge_{0 \leq i < b - 1} not\: i = \#count\{X:F(X,\bm{Z})\}, G.
\end{equation}
where, as before, we assume that the variable $X$ is not a variable in 
$\bm{Z}$, and that all variables in $\bm{Z}$ appear in $G$. 

Theorem \ref{thm-split-replace} establishes the correctness of the two remaining
rewritings implemented by the tool \emph{AAgg}. The tool is designed to check
for splitting and allows the rewriting to take place only if the conditions
of \ref{thm-split-replace} hold.

\label{lastpage}
\end{document}